\def\eqref#1{equation~\ref{#1}}
\def\1{\bm{1}}
\DeclareMathAlphabet{\mathsfit}{\encodingdefault}{\sfdefault}{m}{sl}
\SetMathAlphabet{\mathsfit}{bold}{\encodingdefault}{\sfdefault}{bx}{n}
\def\tmp{{\sf tmp}}
\begin{document}

\twocolumn[
\mlsystitle{ 
Fast training of large kernel models with delayed projections}




\mlsyssetsymbol{equal}{*}

\begin{mlsysauthorlist}
\mlsysauthor{Amirhesam Abedsoltan}{sd}
\mlsysauthor{Siyuan Ma}{goo}
\mlsysauthor{Parthe Pandit}{iit}
\mlsysauthor{Mikhail Belkin}{sd}
\end{mlsysauthorlist}

\mlsysaffiliation{sd}{UC San Diego}
\mlsysaffiliation{goo}{Google}
\mlsysaffiliation{iit}{IIT Bombay}

\mlsyscorrespondingauthor{Amirhesam Abedsoltan}{aabedsoltan@ucsd.edu}
\mlsyscorrespondingauthor{Siyuan Ma}{siyuan.ma9@gmail.com}
\mlsyscorrespondingauthor{Parthe Pandit}{pandit@iitb.ac.in}

\mlsyskeywords{Kernel Machines, Kernel methods, Inducing points, Sparse Gaussian Processes}

\vskip 0.3in

\begin{abstract}
\vskip 0.3in

Classical kernel machines have historically faced significant challenges in scaling to large datasets and model sizes—a key ingredient that has driven the success of neural networks. In this paper, we present a new methodology for building kernel machines that can scale efficiently with both data size and model size. Our algorithm introduces delayed projections to Preconditioned Stochastic Gradient Descent (PSGD) allowing the training of much larger models than was previously feasible, pushing the practical limits of kernel-based learning.
 We validate our algorithm, \EP4, across multiple datasets, demonstrating drastic training  speed up over the existing methods while maintaining comparable or better classification accuracy.
 
\end{abstract}
]

\printAffiliations{}


\section{Introduction}



Kernel methods have strong theoretical foundations and broad applicability. They have also served as the foundation for understanding many significant phenomena in modern machine learning~\citep{jacot2018neural,belkin2018understand,belkin2019reconciling,zhang2021understanding}. Despite these advantages, the scalability of kernel methods has remained a persistent challenge, particularly when applied to large datasets. Addressing this limitation is critical for expanding the utility of kernel-based techniques in modern machine learning applications.


A naive approach for training kernel machines is to directly solve the equivalent kernel matrix inversion problem. In general, the computational complexity of solving the kernel matrix inversion problem is $O(n^3)$, where $n$ is the number of training samples. Thus, computational cost grows rapidly with the size of the dataset, making it computationally intractable for datasets with more than $\sim10^5$ data points.

To address this challenge, various methods employing iterative algorithms and approximations have been proposed. Among these, Gradient Descent (GD)-based algorithms like Pegasos~\citep{shalev2007pegasos} and \EP{1.0,2.0}~\citep{ma2017diving,ma2019kernel} have significantly reduced the computational complexity to $O(n^2)$. These methods, adaptable for stochastic settings, offer more efficient implementations. Nevertheless, the scalability of kernel machines remains constrained by the inherent linkage between the model size and the training set.

Furthermore, the Nystr\"om methods have emerged as a favored approach for scaling kernel machines, with seminal works with~\citep{williams2000using} paving the way. Methods such as {\NYTRO}~\citep{camoriano2016nytro}, {\FALKON}~\citep{rudi2017falkon} and \askotch~\citep{askotch} leverage the Nystr\"om Approximation (NA) in combination with other strategies to enhance performance. {\NYTRO} merges NA with gradient descent to improve condition number, \askotch~combines it with block coordinate descent, whereas \FALKON~combines it with the Conjugate Gradient method, facilitating the handling of large training sets. However, these strategies are limited by model size due to memory restrictions, exhibiting quadratic scaling in relation to the size of the model. For instance, scaling \FALKON~\citep{meanti2020kernel} method to a model size of $512,000$ necessitates over 1TB of RAM, surpassing the capacity of most high-end servers available today. 

Other lines of work in the Gaussian Processes literature, e.g., \citep{titsias2009variational,wilson2015kernel,gardner2018product,GPflow2017}, use so-called \textit{inducing points} to control model complexity. However, these methods face similar scaling issues as they require quadratic memory in terms of the number of inducing points, thus preventing scaling to large models.

Recently, \EP{3.0} was introduced in~\citep{abedsoltan_2023}. Unlike previous versions, \EP{3.0} distangle the model from the training set, similar to \FALKON, but with the added advantage that its memory requirements scale linearly with the model size. This advancement makes it feasible to tackle kernel models of sizes previously deemed unattainable. However, its per iteration time complexity remains quadratic relative to the model size, significantly slowing its practical application. 


In this paper, we build upon \EP{3.0} and introduce \EP{4.0}\footnote{\url{https://github.com/EigenPro/EigenPro/tree/main}}. This new algorithm retains the advantageous features of \EP{3.0}, such as decoupling the model from the training set and linear scaling in memory complexity. Moreover, it significantly improves upon the time complexity, achieving amortized linear scaling per iteration with respect to model size. We empirically verify that the proposed algorithm converges in fewer epochs, without compromising generalization performance.

\subsection{Main contribution}

Our method for kernel machine problems achieves three key advantages: (1) linear amortized time complexity per iteration, (2) linear memory scaling with model size, (3) comparable or superior performance compare to existing methods while achieving up to 600× speedup in our experiments, and (4) an empirically significant reduction in the number of epochs required for convergence, particularly for larger models.  Figure~\ref{fig:time_comparision} demonstrates these benefits on CIFAR5M data set.

 \begin{figure}[h]
    \centering
\includegraphics[width=0.5\textwidth, trim=0 0 -10 0, clip]{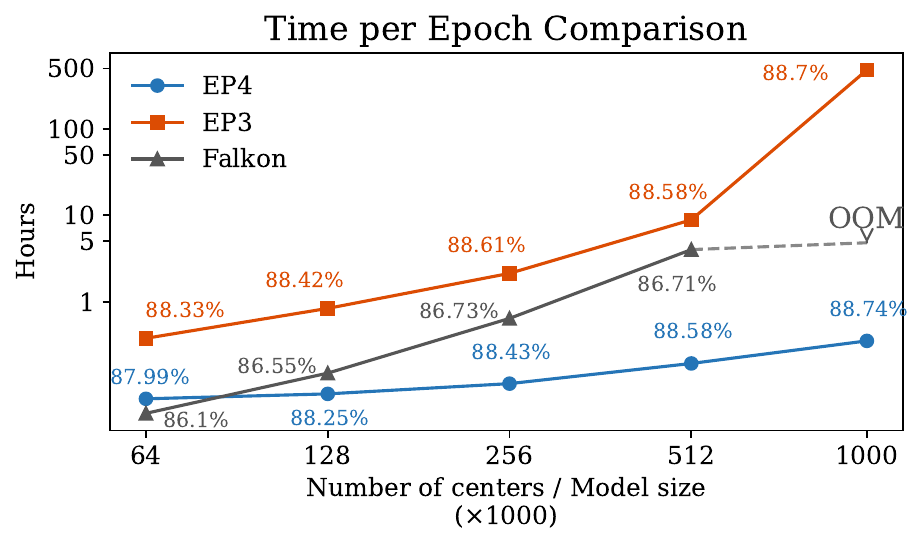}
    \caption{Per epoch time comparison between different solvers. Performance in terms of classification test accuracy (indicated as percentages) is annotated next to each data point, showing that EP4 maintains superior or comparable performance across all model sizes. The detail of the experiment can be found in \Cref{appendix:expts}.}
    \label{fig:time_comparision}
\end{figure}

\subsection{Organization of the Paper}
The remainder of this paper is organized as follows.  Section \ref{sec:prelim} provides the necessary background and preliminaries. In Section \ref{sec:highlevel}, we present a high-level overview of \EP4 and introduce the key insights that enable its dramatic improvement in computational efficiency. In Section \ref{sec:derivation}, we derive the complete algorithm and present our computational optimization techniques. Finally, Section \ref{sec:expt} presents extensive experimental results across multiple datasets and model sizes.

\section{Notation and Background}
\label{sec:prelim}

In what follows, functions are lowercase letters $a$, sets are uppercase letters $A$, vectors are lowercase bold letters $\a$, matrices are uppercase bold letters $\A$, operators are calligraphic letters $\mathcal{A},$ spaces and sub-spaces are boldface calligraphic letters $\bm{\mathcal{A}}.$ 

\textbf{General kernel models.} Following \EP{3.0} \citep{abedsoltan_2023} notations, given training data $(X,\y)=\curly{\x_i\in \Real^d,y_i\in \Real}_{i=1}^n$, \textit{General kernel models} are models of the form,
\begin{align*}
f(\x) = \sum_{i=1}^p\alpha_i K(\x,\z_i).
\end{align*}

Here, $K:\mathbb{R}^d\times \mathbb{R}^d\rightarrow\mathbb{R}$ is a positive semi-definite symmetric kernel function 
and $Z=\{z_i\in\mathbb{R}^d\}_{i=1}^p$ is the set of \textit{centers}, which is not necessary the same as the training set. We will refer to $p$ as the \textit{model size}. We further define $\Hilbert$ is the (unique) reproducing kernel Hilbert space (RKHS) corresponding to $K$.

\textbf{Loss function.} Our goal will be to find the solution to the following infinite-dimensional Mean Squared Error (MSE) problem for general kernel models,
\begin{align}\label{eq:loss_min}
    &\minimize{f\in \Hilbert}\, L(f)=\sum_{i=1}^n (f(\x_i)-y_i)^2,\\
    &\subjectto\quad {f\in\Zspan}:=\text{span}\!\round{\curly{K(\cdot,\z_j)}_{j=1}^p}.
\end{align}

\textbf{Evaluations and kernel matrices.} The vector of evaluations of a function $f$ over a set $X=\curly{\x_i}_{i=1}^n$ is denoted $f(X):=(f(\x_i))\in\Real^n$. For sets $X$ and $Z$, with $|X|=n$ and $|Z|=p$, we denote the kernel matrix $K(X,Z)\in\Real^{n\times p},$ while $K(Z,X)=K(X,Z)\T$.
Similarly, $K(\cdot,X)\in\Hilbert^{n}$ is a vector of functions, and we use $K(\cdot,X)\alphavec:=\sum_{i=1}^n K(\cdot,\x_i)\alpha_i\in\Hilbert,$ to denote their linear combination.
Finally, for an operator $\mc A,$ a function $a$, and a set $A=\{\a_i\}_{i=1}^k$, we denote the vector of evaluations of the output, 
\begin{align}
 \mc A\curly{a}(A)   := (b(\a_i))\in\Real^k\qquad \text{where}\quad b=\mc A\round{a}.
\end{align}

\textbf{Fr\'echet derivative.} Given a function $J:\Hilbert\to \Real$,
the Fr\'echet derivative of $J$ with respect to $f$ is a linear functional, denoted $\nabla_f J$, such that for $h\in\Hilbert$
\begin{align}
    \lim_{\norm{h}_\Hilbert\rightarrow 0}\frac{\abs{J(f+h)-J(f)-\nabla_f J(h)}}{\norm{h}_\Hilbert}=0.
\end{align}
Since $\nabla_f J$ is a linear functional, it lies in the dual space $\Hilbert^*.$ Since $\Hilbert$ is a Hilbert space, it is self-dual, whereby $\Hilbert^*=\Hilbert.$
If $f$ is a general kernel model, and $L$ is the square loss for a given dataset $(X,\y)$, \ie, $L(f):=\frac12\sum_{i=1}^n(f(\x_i)-y_i)^2$ we can apply the chain rule, and using reproducing property of $\Hilbert$, and the fact that $\nabla_f\inner{f,g}_\Hilbert=g$, we get, that the Fr\'echet derivative of $L$, at $f=f_0$ is, 
\begin{align}\label{eq:frechet_square_loss}
    \nabla_f L(f_0)&=\sum_{i=1}^n(f_0(\x_i)-y_i)\nabla_{\!f} f(\x_i) \\
    &= K(\cdot,X)(f_0(X)-\y).
\end{align}

\textbf{Hessian operator.} The Hessian operator $\nabla^2_f L:\Hilbert\rightarrow\Hilbert$ for the square loss is given by,
\begin{align}\label{eq:hessian}
    &\mc K:= \sum_{i=1}^n K(\cdot,\x_i)\otimes K(\cdot,\x_i),\\ 
    &\mc K\curly{f}(\z)\!=\!
    \sum_{i=1}^n K(\z,\x_i)f(\x_i)=K(\z,X)f(X).
\end{align}

Operator $\mc K$ has non-negative eigenvalues which we assume are ordered as $\lambda_1\geq \lambda_2 \geq \dotsb\geq\lambda_n\geq 0$. Hence we have eigen-decompositions for this operator written as $\mc K = \sum_{i=1}^{n} \lambda_i\cdot \psi_i\otimes\psi_i$. Combining Equations \ref{eq:frechet_square_loss} and \ref{eq:hessian}, we can rewrite the Fréchet derivative of the loss function as following:

\begin{equation}\label{eq:derivative_hessian}
\nabla_f L(f_0)(z) = \mc K\curly{f_0(X)-\y}(z) 
\end{equation}

\textbf{Exact minimum norm solution.} The closed-form minimum $\norm{\cdot}_{\Hilbert}$ norm solution to the problem defined in \cref{eq:loss_min} is given by:

\begin{equation}\label{eq:pinvers_solution}
    \hat{f} := K(\cdot, Z) K\pinv(Z, X) \y,
\end{equation}

where $+$ is pseudoinverse or Moore–Penrose inverse. In the case of $X=Z$ it simplifies to $\hat{f} := K(\cdot, X) K\inv(X, X) \y$.

\textbf{Gradient Descent (GD).} If we apply GD on the optimization problem in \ref{eq:loss_min}, with learning rate $\eta$, in the $\Hilbert$ functional space, the update is as following,

\begin{subequations}
\begin{align}\label{eq:sgd}
    f_{t+1} &= f_{t} - \eta \cdot \nabla_f L(f_t) \\
    &=f_{t} - \eta  K(\cdot,X)\round{f_t(X)-\y}.
\end{align}
\end{subequations}
The first point to note is that the derivative lies in $\Xspan := \text{span}\round{\curly{K(\cdot,\x_j)}_{j=1}^n}$ rather than in $\Zspan$. Therefore, when $X \neq Z$, SGD cannot be applied in this form. We will revisit this issue later. The second point is that in the case of $X = Z$, traditional \textit{kernel regression} problem, the convergence of SGD depends on the condition number of $\mc K$. Simply put, this is the ratio of the largest to the smallest non-zero singular value of the Hessian operator defined in \ref{eq:hessian}. It is known that for general kernel models this condition number is usually ill conditioned and converges slow, see \citep{abedsoltan2023nystrom} for more details on this.

\begin{figure*}[h!]
    \centering
    \includegraphics[clip, trim=6cm 9cm 6cm 9cm, width=0.85\textwidth]{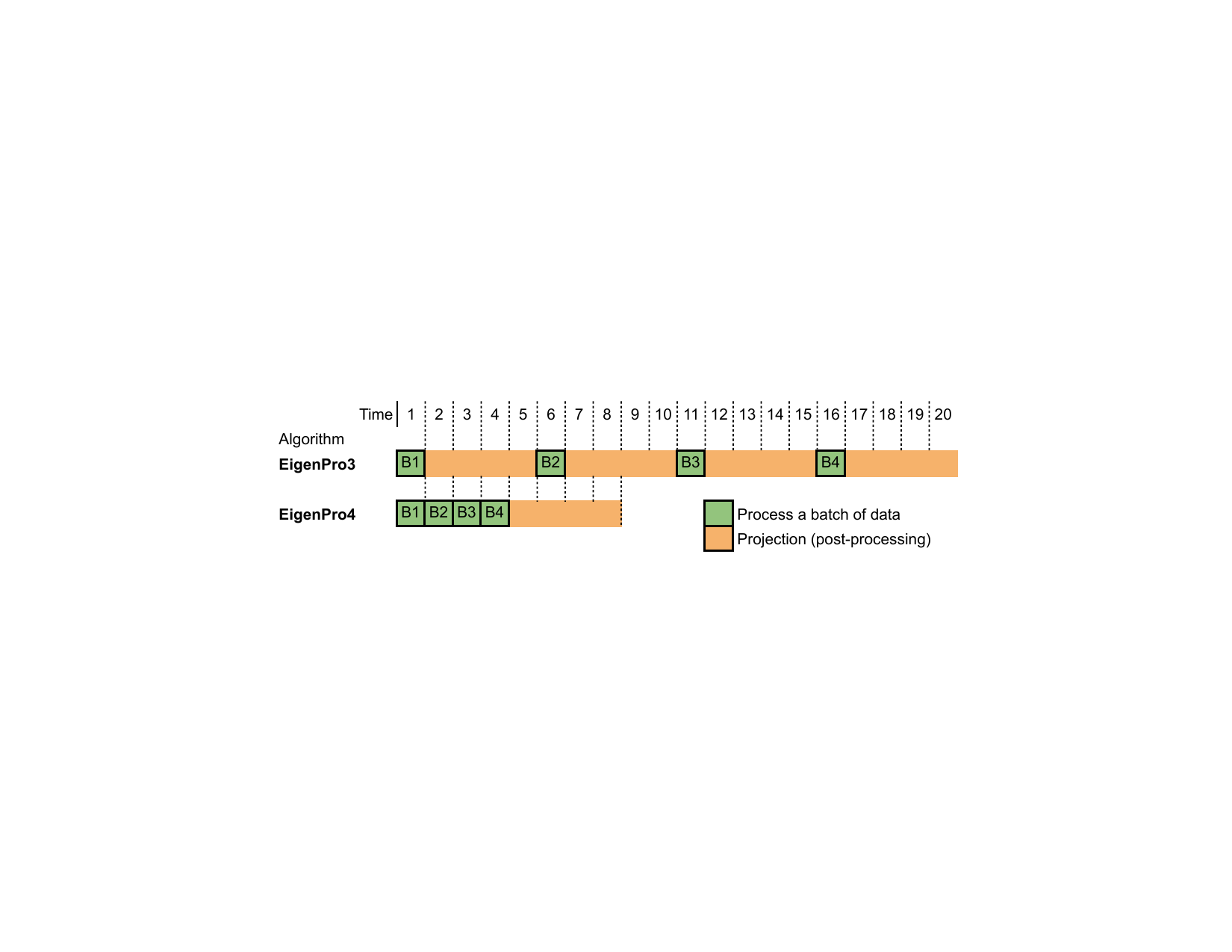}
    \caption{\textbf{Design of EigenPro4.} An illustration of how batches of data are processed by the two algorithms. EigenPro3 involves an expensive  \textit{projection} step when processing every batch of data. EigenPro4 waits for multiple batches to be processed before running the projection step for all of them together. This reduces the amortized cost for processing each batch.}
    \label{fig:illustration_ep4_vs_ep3}
\end{figure*}

\textbf{\EP.} Prior work, \EP, by \citep{ma2017diving}, addresses the slow convergence of SGD by introducing a preconditioned stochastic gradient descent mechanism in Hilbert spaces. The update rule is the same as Equation \ref{eq
} but with an additional preconditioner $\mc{P}:\Hilbert\rightarrow\Hilbert$ applied to the gradient,

\begin{align}\label{eq
} f_{t+1} = f_{t} - \eta \cdot \mathcal{P}{\nabla_f L(f_t)}. \end{align}

In short, the role of the preconditioner $\mc{P}$ is to suppress the top eigenvalues of the Hessian operator $\mc K$ to improve the condition number. We next explicitly define $\mc P$.

\begin{definition}[Top-$q$ Eigensystem]\label{def
} Let $\lambda_1 > \lambda_2 > \ldots > \lambda_n$ be the eigenvalues of a Hermitian matrix $\A\in\Real^{n\times n}$, where for unit-norm $\e_i$, we have $\A\e_i = \lambda_i\e_i$. We define the tuple $(\Lambda_q, \E_q, \lambda_{q+1})$ as the top-$q$ eigensystem, where: 
\begin{subequations}
\begin{align} 
    &\Lambda_q := {\rm diag}(\lambda_1, \lambda_2, \ldots, \lambda_q) \in \Real^{q \times q}, \\ 
    &\E_q := [\e_1, \e_2, \ldots, \e_q] \in \Real^{n \times q}. \end{align} 
\end{subequations}
\end{definition}

If $\A=K(X_s,X_s)$, we also define the following objects that is used in the iterations of \EP4
\begin{subequations}\label{eq:def_D_and_F}    
\begin{align}
    \D &:= \Lambda\inv - \lambda_{q+1}\Lambda^{-2}\\
    \F &:= \E\sqrt{\D}
\end{align}
\end{subequations}

\textbf{Preconditioner.} Using Definition \ref{def
}, let $(\Lambda_q, \E_q, \lambda_{q+1})$ as the top-$q$ eigensystem of $K(X,X)$, the preconditioner $\mc{P}:\Hilbert\to\Hilbert$ can be explicitly written as following,

\begin{align}
\label{eq:exact_preconditioner}
\mc P := \mathcal{I}  - \sum_{i=1}^q \round{1 - \frac{\lambda_{q+1}}{\lambda_{q}}} \psi_i \otimes \psi_i. 
\end{align}

\textbf{Nystr\"om approximate preconditioner.} \EP2, introduced by \citep{ma2019kernel}, implements a stochastic approximation for $\mc{P}$ based on the Nyström extension, thereby reducing the time and memory complexity compared to \EP. The first step is to approximate the Hessian operator using the Nyström extension as follows,

\begin{subequations}
\label{eq:ps} 
\begin{align}
\mc K^s &:= \sum_{k=1}^s K(\cdot,\x_{i_k})\otimes K(\cdot,\x_{i_k}) \\
&= \sum_{i=1}^{s} \lambda_i^s \cdot \psi_i^s \otimes \psi_i^s. 
\end{align}
\end{subequations}

This is a Nyström approximation of \(\mc K\) using \(s\) uniformly random samples from \(X\), referred to as \(X_s\), where \((\Lambda_q^s, \E_q^s, \lambda_{q+1}^s)\) represents the corresponding top-\(q\) eigensystem of \(K(X_s, X_s)\). Using this approximation, we can define the approximated preconditioner as follows,
\begin{align} 
\label{eq:nystrom_preconditioner}
\mc P^s := \mathcal{I} - \sum_{i=1}^q \round{1 - \frac{\lambda_{q+1}^s}{\lambda_{q}^s}} \psi_i^s \otimes \psi_i^s 
\end{align}
For more details on the performance of this preconditioner compared to the case of $s=n$, see \citet{abedsoltan2023nystrom}, who showed that choosing $s\gtrsim\log^4n$ is sufficient.

Of particular importance is the action of this preconditioner on any function of the form $K(\cdot,A)\bm{u}$.
\begin{subequations}
\label{eq:action_of_nystrom_preconditioner}
\begin{align}
    &\mc P^s K(\cdot, A)\bm{u}
    =K(\cdot, A)\bm{u}-\sum_{i=1}^q \round{1 - \tfrac{\lambda_{q+1}^s}{\lambda_{q}^s}}\psi_i \psi_i(A)\tran\bm{u}\\
    &= K(\cdot,A)\bm{u}\nonumber\\
    &\quad-\sum_{i=1}^q \round{1 - \tfrac{\lambda_{q+1}^s}{\lambda_{q}^s}}\frac{K(\cdot,X_s)\e_i}{\sqrt{\lambda_i}} \frac{\e_i\tran K(X_s,A)}{\sqrt{\lambda_i}} \bm{u}\\
    &=K(\cdot,A)\bm{u}-K(\cdot,X_s)\E\D\E\tran K(X_s,A)\bm{u}\\
    &=K(\cdot,A)\bm{u}-K(\cdot,X_s)\F\F\tran K(X_s,A)\bm{u}
\end{align}
\end{subequations}
where recall the definitions of $\D$ and $\F$ in \cref{eq:def_D_and_F}.



\textbf{\EP3.} The primary limitation of \EP2 was its inability to handle cases where \(Z \neq X\), a necessary condition for disentangling the model and the training set. \EP3 overcomes this limitation by recognizing that although the gradients in \eqref{eq:frechet_square_loss} may not lie within \(\Zspan\), it is possible to project them back to \(\Zspan\). Consequently, \EP3 can be summarized as follows:
\begin{align}\label{eq:ep3}
    f_{t+1} =  \text{proj}_{\Zspan}\left(f_t - \eta \mathcal{P}^s\{\wt\nabla_f L(f_t)\}\right),
\end{align}
where $\text{proj}_{\Zspan}\round{u} := \argmin{f\in\Zspan}\norm{u-f}_\Hilbert^2$ for any $u \in \Hilbert$. As shown in \citep[Section 4.2 ]{abedsoltan_2023}, the exact projection is,

\begin{align}\label{eq:projection}
    \text{proj}_{\Zspan}(u) = K(\cdot,Z)K^{-1}(Z,Z) u(Z).
\end{align}

This projection can be interpreted as solving a kernel in $\Zspan$ and can be approximated using \EP2, as done in \citep{abedsoltan_2023}, with a time complexity that scales quadratically with model size. However, since this projection must be performed after each stochastic step, it becomes the most computationally expensive part of the \EP3 algorithm.

\begin{figure*}[!ht]
    \centering
\includegraphics[width=1.0\textwidth]{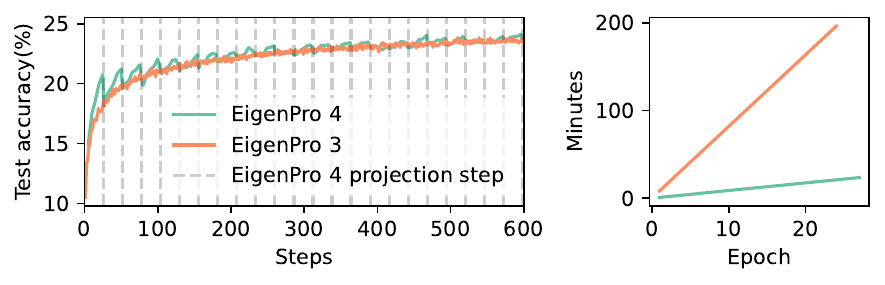}
\vspace{-3pt}
    \caption{Performance and computational time comparison between \EP{4.0} ($T=11$) and \EP{3.0} (equivalent to $T=1$), highlighting the impact of the projection step on the performance of \EP{4.0}. The detail of the experiment can be found in \Cref{appendix:expts}.}
    \label{fig:EP3_VS_EP4}
\end{figure*}

\section{\EP\,4: algorithm design}\label{sec:highlevel}

In this section, we provide a high-level overview and illustrations to highlight the key components of \EP4 and how it significantly reduces training time.

\subsection{Challenge to Scaling: High Overhead of Projection}

The key development of \EP3 over its contemporaries was that it could train models of this form in $O(p)$ memory. This was a huge improvement over the prior methods which required $O(p^2)$ memory \citep{rudi2017falkon,askotch}.


\begin{table}
\centering
\begin{tabular}{lcccc}
\toprule
\multirow{2}{*}{Algorithm}  
& \multicolumn{3}{c}{FLOPS}                 & \multirow{2}{*}{Memory} \\ 
& \multicolumn{1}{c}{setup} 
& \multicolumn{2}{c}{per sample} 
&
\\ \midrule
\EP{4.0} (ours)               
& \multicolumn{1}{c}{$ O(1)$}      
& \multicolumn{2}{c}{$O(p)$}              
& $O(p)$ 
\\ 
\EP{3.0} 
& $ O(1)$     
& \multicolumn{2}{c}{$O(p^2)$}              
&  $O(p)$      
\\ 
\FALKON            
& \multicolumn{1}{c}{$O(p^3)$}      
& \multicolumn{2}{c}{$O(p)$}              
& $O(p^2)$
\\ \bottomrule
\end{tabular}

\caption{\label{tab:computational complexity}
   \textbf{Algorithm complexity.}  with respect to number of model centers $p$. 
   Here we assumed only a constant number of epochs is needed for large scale experiments. 
   Cost of kernel evaluations and number of classes are assumed to be $O(1)$.
} 
\end{table}

However, \EP\,3 has a high cost $O(mp+p^2)$ per batch of data processed, as summarized in \Cref{tab:computational complexity}. This is especially expensive when $m\ll p$, i.e., when the batch size $m$ is  small compared to the model size $p$.



\subsection{Main Idea: Delayed Projection}



To address the computational complexity challenges, EP4 amortizes projection costs by delaying projections for T iterations . The value of T is a hyperparameter, with an effective selection method detailed in Section~\ref{sec:exact_computational_costs}. Figure~\ref{fig:illustration_ep4_vs_ep3} illustrates this delayed projection mechanism. ( T = 4 in Figure~\ref{fig:illustration_ep4_vs_ep3})


In fact, \EP3 is a special case of \EP4 with parameter $T=1$. However, we show in \cref{eq:optimal_T} that to minimize total time of training, the optimal value of $T$ is in fact proportional to $\frac{p}{m}.$ For this value of $T$, the cost of training per batch is $O(p)$.

Figure~\ref{fig:EP3_VS_EP4} shows how \EP4 and \EP3 perform over training iterations. \EP4 accuracy improves between projections and drops after each projection step. While \EP3 projects at every step, \EP4 maintains comparable accuracy with fewer projections. The left panel of Figure~\ref{fig:EP3_VS_EP4} confirms that both methods reach similar final accuracy, while the right panel shows \EP4 significant speed advantage. With continued training, \EP4 accuracy drops from projections become progressively smaller.

\section{\EP\,4: Algorithm Development and Optimization}\label{sec:derivation}

In this section, we present the EigenPro 4 algorithm in three parts. First, we introduce the algorithm's main components: the pre-projection and projection steps. Then, we detail each of these steps in the following two subsections. Finally, we describe a computational optimization that reduces the runtime of \EP4 by half.


\subsection{Derivation of the \EP4 Algorithm}

\begin{figure*}[h!]
    \centering
    \includegraphics[clip, width=0.9\textwidth]{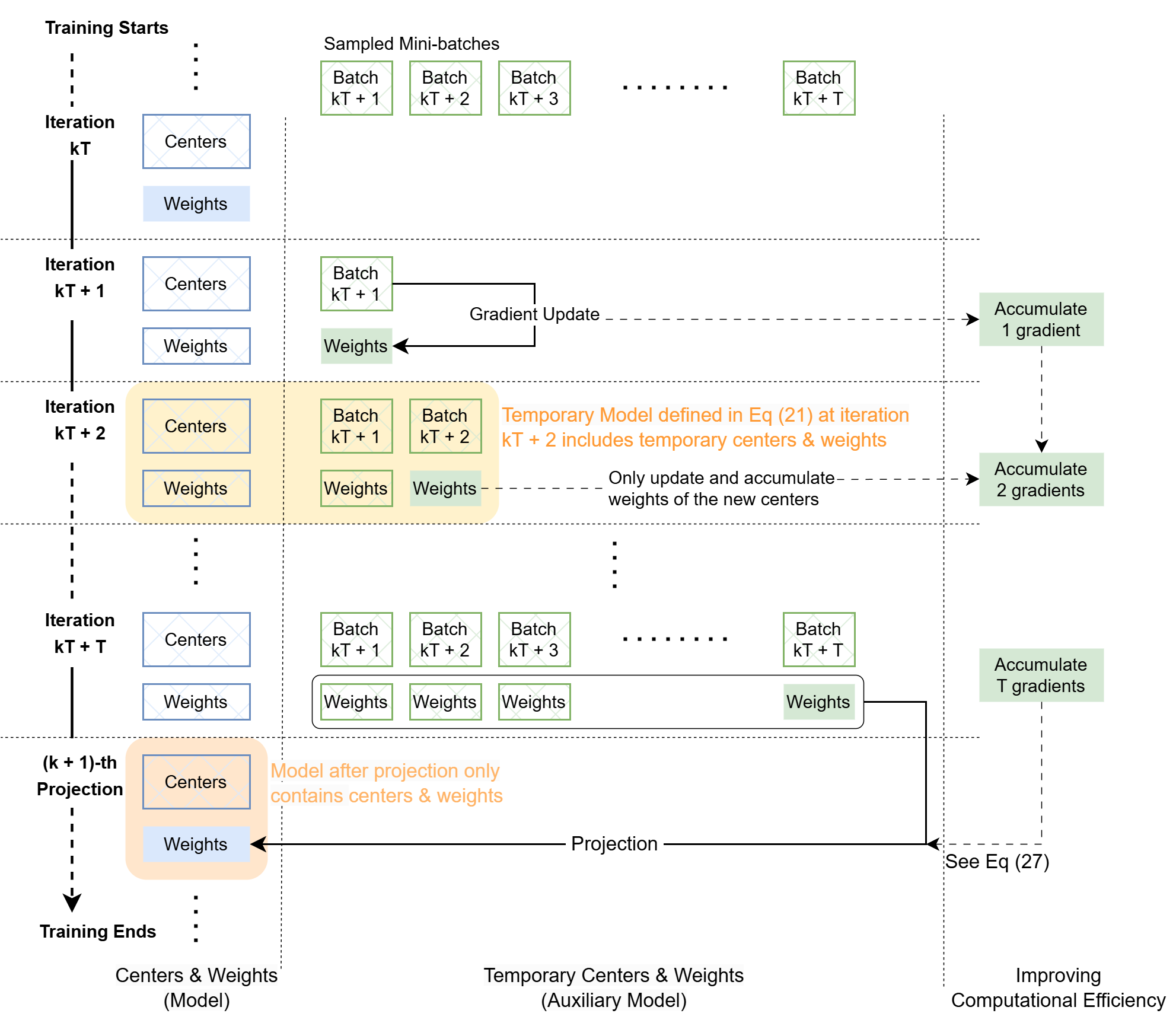}
    \vskip -0.1in
    \caption{\textbf{Overview of iteration scheme for \EP4.} The figure illustrates how the model updates are performed over multiple iterations in EigenPro4. Weights are updated using batches, and gradients are accumulated iteratively until a projection step is executed. Temporary centers and weights are maintained during auxiliary iterations, which are merged through projection after a certain number of batches. This approach reduces the computational cost by accumulating gradients before performing the projection, leading to more efficient batch processing.
    }
    \label{fig:iteration_scheme}
\end{figure*}
 As mentioned previously \(T\) is a crucial hyperparameter that determines the frequency of projection back to \(\Zspan\) after every \(T\) steps. Before the projection step $T$, at every step when a new batch \((X_m, y_m)\) is fetched, it is added to a set defined as the ``temporary centers'' set, denoted by \(Z_{\tmp}\). Starting with an empty set, \(Z_{\tmp} = \emptyset\), we continuously add to \(Z_{\tmp}\) with temporary centers until the step count reaches \(T\).

Formally, the prediction function prior to the projection is no longer fixed and is now expanding. The model can be described as follows:
\begin{align}\label{eq:auxilary_model}
f(x) = \underbrace{\overbrace{\sum_{\z\in Z} \alpha_{\z} K(\x,\z)}^{\text{original model}} + \overbrace{\sum_{\z\in Z_{\tmp}} \beta_{\z} K(\x,\z)}^{\text{temporary model}}}_{\text{auxiliary model}}
\end{align}
where $\alpha_{\z}$ refers to the weights corresponding to the original model center $\z$ and $\beta_{\z}$ refers to the weights corresponding to the temporary model center. We refer to the combination of original model and temporary model  as auxiliary model. The full \EP4 algorithm has been illustrated in \Cref{fig:iteration_scheme} and mathematically can be summerized as following,

\begin{align}\label{eq:ep_plus}
    f_{t} = 
    \begin{cases} 
    \text{proj}_{\Zspan}\left(f_{t-1} - \eta \mathcal{P}^s\{\wt \nabla_f L(f_{t-1})\}\right), &  t\equiv 0 \mod T, \\
    f_{t-1} - \eta \mathcal{P}^s\{\wt \nabla_f L(f_t)\}, & \text{otherwise}.
    \end{cases}
\end{align}
where $\wt\grad L$ is a stochastic gradient of the loss function computed over a mini-batch of data, and $\mc P^s$ is a preconditioner.

\subsection{Pre-projection Steps}
Based on \Cref{eq:ep_plus}, suppose $(X_1,y_1),\ldots, (X_T,y_T)$ are the minibatches of size $m$, and the initial model is $f_0 = K(\cdot,Z)\alphavec$. After $t < T$ step, the following holds,
\begin{align*}
    f_t &= f_{t-1} - \eta\mc P^s K(\cdot,X_t)(f_{t-1}(X_t)-y_t)\\ 
    &= f_0 - \sum_{i=1}^{t}\mc P^s K(\cdot,X_i)(f_{i-1}(X_i)-y_i)
\end{align*}
Replacing $\mc P^s$ with \cref{eq:ps}, $f_0=K(\cdot,Z)\alphavec$ and letting $(\Lambda_{q},\E_{q},\lambda_{q+1})$ be the top-$q$ eigensystem of $K(X_s, X_s)$, we can simplify the update above as following,
\begin{align}\label{eq:update}
    &f_t = K(\cdot,Z)\alphavec_{t} \nonumber
    \\&- \eta \sum_{i=1}^{t} \Bigg( K(\cdot,X_i) \nonumber\\
    &- K(\cdot,X_s) \F \F\tran K(X_s,X_i) \Bigg)(f_{i-1}(X_i) - y_i)\nonumber \\
    &= K(\cdot,Z)\alphavec - \sum_{i=1}^t K(\cdot,X_i) \round{\eta(f_{i-1}(X_i) - y_i)} \nonumber\\
    &\quad + K(\cdot,X_s) \sum_{i=1}^t \eta \F \F\tran K(X_s,X_i) (f_{i-1}(X_i) - y_i)
\end{align}
where, $\D := \Lambda_q\inv\left(\I_q-\lambda_{q+1}\Lambda_q\inv\right)$. This update rule implies that after $t$ steps, the weights corresponding to the original centers $Z$ remain unchanged to $\alphavec$, the weights for the temporary centers $X_i$ are set once to $\eta(f_{i-1}(X_i)-y_i)$ after they are added, and do not change thereafter, and finally, the weights associated with the Nyström samples $X_s$ are updated after each batch via an additive update. This is how we update the weights before projection at step $T$.

\subsection{Projection Step: Update for $t=T$}
Once, we reach step $T$ we need to project $f_T$ into $\Zspan$, or formally 
\begin{align}\label{eq:PGM}
    f_{T} = \text{proj}_{\Zspan}\left(f_{T-1} - \eta \mathcal{P}^s\{\wt\nabla_f L(f_{T-1})\}\right),
\end{align}

Applying Proposition 2 from \citet{abedsoltan_2023}, the solution to this projection problem is as follows,
\begin{align}\label{eq:exact_project_Z}
    f_{T} = K(\cdot,Z)K\inv(Z,Z) f_{T-1}(Z)
\end{align}


Here, we define $\h$ as the \textit{accumulated gradient}.

The final \EP{4} can be found in Algorithm \ref{alg:eigenpro4_real}. Note that we follow the same \textit{inexact projection} scheme used in \citep{abedsoltan_2023} to approximate the exact projection described in \eqref{eq:exact_project_Z}.

The benefit of this approximation is that we don't need to solve the problem exactly in $\Xspan$, nor do we need to project back to $\Zspan$ after each iteration. This approach offers the best of both worlds. In the next section, we demonstrate the effectiveness of this approach compared to prior state-of-the-art methods.

\begin{algorithm}[t]
\caption{\EP4}\label{alg:eigenpro4_real}
\begin{algorithmic}[1]
\REQUIRE Data $(X,\y)$, centers $Z$, batch size $m$, Nystr\"om size $s$, preconditioner level $q,$ projection period $T$ 
\STATE Fetch subsample $X_s\subseteq X$ of size $s$, corresponding weights $\alpha_{s}=0$
\STATE $(\Lambda,\E,\lambda_{q+1})\gets$ top-$q$ eigensystem of $K(X_s,X_s)$ 
\STATE $\D = (\Lambda^{-\!1}\!-\!\lambda_{q\!+\!1}\Lambda^{-\!2})\in\Real^{q\times q}$
\STATE $\F=\E\sqrt{\D} \in \Real^{s\times q}$
\STATE $\M\!=\! K(Z,X_s)\F\in\Real^{p\times q}$
\STATE $Z_\tmp = \emptyset$ , $\alphavec_\tmp=\emptyset$, $\alphavec_{s}=\zero_s$, $\h=\zero_p$
\WHILE{Stopping criterion is not reached}
    \FOR{$t=\{1,2,\ldots,T\}$}
\STATE Fetch minibatch $(X_m,\y_m)$
\STATE $\g_m \gets K(X_m, Z)\alphavec- \y_m$ \label{line:grad1}
\STATE $\g_m\gets\g_m +K(X_m, Z_\tmp)\alphavec_\tmp$ \label{line:grad2}
\STATE $\g_m\gets \g_m+K(X_m, X_s)\alphavec_s$ \label{line:grad3}
\STATE $Z_\tmp$.append($X_m$)
\STATE $\alphavec_\tmp$.append($ -\eta\cdot \g_m$)
\STATE $\alphavec_{s} = \alphavec_{s} +\eta\cdot \F\F\tran K(X_s,X_m)\g_m$ \label{line:update_nys_weights}
\STATE $\h \gets \h- \eta\cdot K(Z,X_m)\g_m$ 
\STATE $\h\gets\h+ \eta\cdot\M\F\tran K(X_s,X_m)\g_m$ \label{line:accumulate}\hfill 
\ENDFOR

\STATE \label{line:projection}$\thetavec\leftarrow\text{proj}_{\Zspan} (\h)$
\STATE $\alphavec\gets\alphavec- \frac{n}m\eta\thetavec$ 
\STATE $Z_\tmp \gets \emptyset$, $\alphavec_\tmp\gets \emptyset$, $\alphavec_s \gets \zero_s$, $\h\gets \zero_p$
\ENDWHILE
\end{algorithmic}
\end{algorithm}

\subsection{Improving Computational Efficiency}
Upon careful examination of the derivations in \eqref{eq:update}, we observe that $f_{T-1}(Z)$ have already been computed previously. This allows us to efficiently reuse $f_{T-1}(Z)$ as follows,
\begin{align}
    &f_{T-1}(Z) = K(Z,Z)\alphavec 
    - \sum_{i=1}^{T-1} K(Z,X_i)\round{ \eta (f_{i-1}(X_i) - y_i) }\nonumber \\
    &\quad + \sum_{i=1}^{T-1} K(Z,X_s) \round{ \eta \F \F\tran K(X_s,X_i) (f_{i-1}(X_i) - y_i) }\nonumber \\
    &= K(Z,Z)\alphavec 
    - \sum_{i=1}^{T-1} K(Z,X_i)\round{ \eta (f_{i-1}(X_i) - y_i) } \nonumber\\
    &\quad + \sum_{i=1}^{T-1} \eta \round{ \M \F\tran K(X_s,X_i) (f_{i-1}(X_i) - y_i) }
\label{eq:intermediate_update}
\end{align}
where $\M = K(Z,X_s)\F$. plugging this in \eqref{eq:exact_project_Z} we obtain, 

\begin{align}
    &f_{T}(Z)= K(\cdot,Z)\left(\alphavec-K^{-1}(Z,Z)\h\right), \nonumber\\
    &\h := -\eta\bigg( \sum_{i=1}^{T-1} 
 K(Z,X_i)\round{   (f_{i-1}(X_i) - y_i) } \nonumber\\
    &\phantom{:=}\quad + \sum_{i=1}^{T-1}  \round{ \M \E_q^\top K(X_s,X_i) (f_{i-1}(X_i) - y_i)} \bigg)
\end{align}

\subsection{Benefits of Approximate Preconditioning}
 Note that the preconditioner $\mc P^s$ allows to drastically improve the speed of the iterations. But at the same time, the Nystr\"om approximation also enables the algorithm to become tractable. Due to this approximate preconditioner, we only need to maintain $s$ temporary centers in the auxiliary model, whereas the exact preconditioner ($s=n$) makes the algorithm intractable. Theoretically we only require $s=O(\log^4 n)$ as shown in \citep{abedsoltan2023nystrom}.

\begin{figure*}[th]
        \centering 
        \begin{subfigure}{1\textwidth}
        \includegraphics[width=1.0\linewidth]{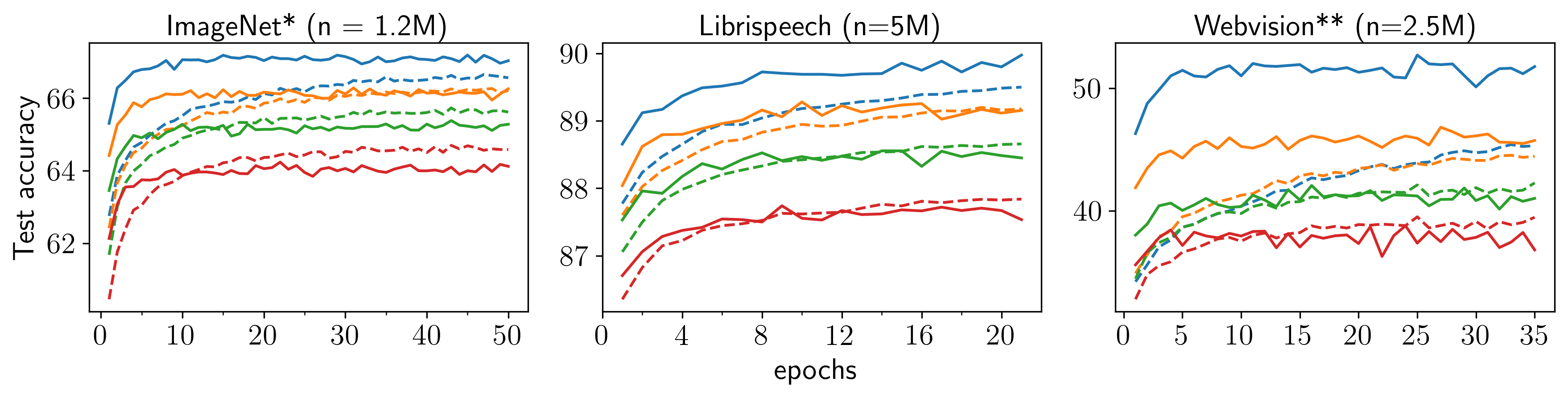}%
        \end{subfigure}
        \begin{subfigure}{1\textwidth}
        \includegraphics[width=1.0\linewidth]{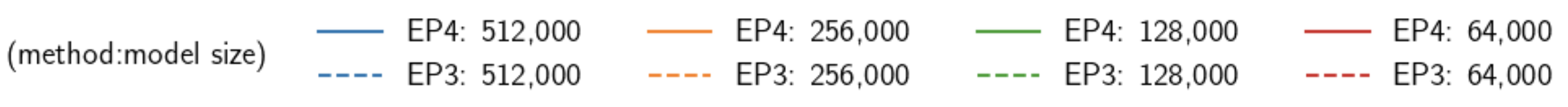}%
        \end{subfigure}
        \caption{ Multi-epoch performance and convergence comparison for \EP3 and \EP4. The detail of the experiment can be found in \Cref{appendix:expts}.
        }
    \label{fig:ep4_ep3_epochs}
\end{figure*}

\section{Numerical experiments}\label{sec:expt}

In this section, we demonstrate that our approach achieves orders of magnitude speedup over the state-of-the-art kernel methods while providing comparable or better generalization performance. We compare the performance of several kernel methods using the following data sets (1) CIFAR5M, CIFAR5M\footnote{\label{mobilenet2}feature extraction using MobileNetV2} \citep{nakkiran2020deep}, (3) ImageNet$^*$, \citep{deng2009imagenet}, (4) Webvision\footnote[2]{feature extraction using ResNet-18}, \citep{li2017webvision}, and (8) Librispeech, \citep{panayotov2015librispeech}. Details about datasets can be found in Appendix \ref{appendix:expts}. While our method is compatible with any kernel function, we opted for the Laplace kernel in our experiments due to its straightforward implementation and strong empirical performance. For handling multi-class classification, we decompose the problem into  separate binary regression tasks, where each class is predicted using targets in $\{0,1\}$. The final classification is determined by selecting the class with the maximum predicted value among all binary predictions.
We run the experiments on a platform with one A100 GPU, one V100 GPU, and one Intel(R) Xeon(R) Gold 6248 CPU.

\paragraph{Substantial Reduction in Per-Epoch Training Time.}
\EP4 has substantially reduced the per-epoch training time, making it the most efficient kernel method on modern machine learning hardware.
In contrast to performing projection every mini-batch iteration as in \EP3, \EP4 schedules one projection every few iterations such that its amortized cost is comparable to that of the standard iterations.
This results in an ideal per-sample complexity $O(p)$, a remarkable improvement over the $O(p^2)$ complexity from \EP3.

In Table \ref{tab:comparison}, we evaluate the performance and computational timing for a single epoch of our proposed model against established kernel regression methods. As noted earlier, \FALKON~ exhibits limitations due to its quadratic memory complexity. For the CIFAR5M$^*$ dataset, training with a model size of 512,000 required 1.3TB of RAM, while scaling to 1M model size necessitated over 5TB. Resource constraints limited our \FALKON~ benchmarks to model sizes of 128,000 and 64,000 for the remaining datasets. While \EP3 addresses these memory constraints, it demonstrates significant computational overhead, particularly evident in the Librispeech dataset where our method, \EP4, achieves a $411\times$ speedup. Notably, \EP4 maintains comparable or superior performance across all evaluated datasets relative to both baseline methods.

\vspace{-15pt}
\begin{figure}[h!]
        \centering 
        \begin{subfigure}{1.0\textwidth}
        \includegraphics[width=0.4\linewidth]{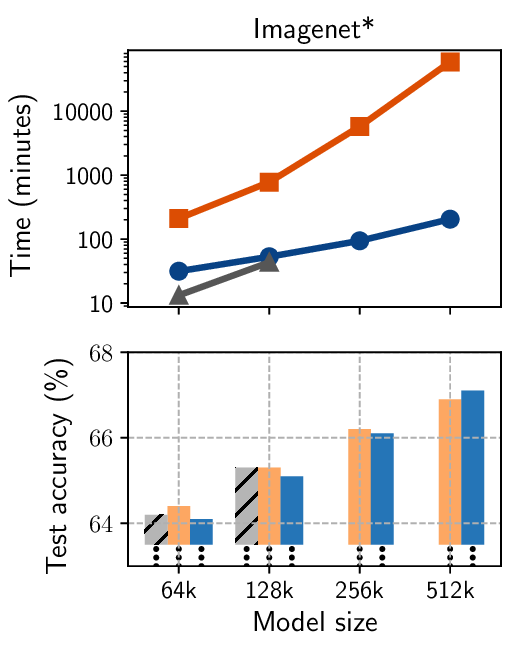}%
        \end{subfigure}
        \vspace{-20pt}
        \begin{subfigure}{1\textwidth}
        \includegraphics[width=0.5\linewidth]{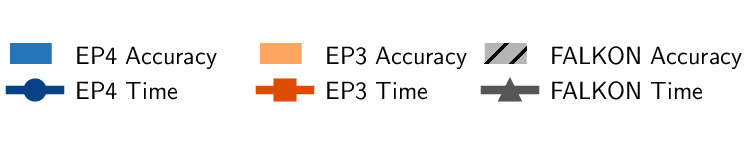}%
        \end{subfigure}
        \caption{ Time and performance comparison for \FALKON, \EP3 and \EP4 for Imagent data set. Details of the experiments can be found in \Cref{appendix:expts}.
        }
    \label{fig:3methods}
\end{figure}

\begin{table*}[th]
\centering
\resizebox{\textwidth}{!}{%
\renewcommand{\arraystretch}{1.2}
\setlength{\tabcolsep}{4pt}
\footnotesize
\begin{tabular}{|c|l|cccc|}
\hline
Model size & Method & CIFAR5M*($n=5$M) & CIFAR5M ($n=6$M) & Librispeech ($n=10$M) & Webvision ($n=5.2$M) \\ \hline
\multirow{3}{*}{\centering p = 64K}
& \EP\,4 & 5m (4.6x, 88\%) & 3m (\textbf{15x}, \textbf{69\%}) & 16m (9.1x, \textbf{86.8\%}) & 2m (\textbf{45.5x}, \textbf{24.3\%}) \\
& \EP\,3 & 23m (1x, \textbf{88.3\%}) & 45m (1x, 68.8\%) & 145m (1x, 85.4\%) & 91m (1x, 24\%) \\
& Falkon & 3m (\textbf{7.67x}, 86.1\%) & 5m (9x, 57.7\%) & 9m (\textbf{16.11x}, 81.0\%) & 4m (22.75x, 21.7\%) \\ \hline
\multirow{3}{*}{\centering p = 128K}
& \EP\,4 & 5m (\textbf{10x}, 88.25\%) & 4m (\textbf{26.25x}, \textbf{70.9\%}) & 19m (\textbf{17.95x}, \textbf{87.8\%}) & 4m (\textbf{49.75x}, \textbf{24.9\%})\\
& \EP\,3 & 50m (1x, \textbf{88.42\%}) & 105m (1x, 70.3\%) & 341m (1x, 84.75\%) & 199m (1x, 24.5\%)\\
& Falkon & 9m (5.56x, 86.55\%) & 11m (9.55x, 59.4\%) & 21m (16.24x, 82.30\%) & 13m (15.31x, 22.4\%) \\ \hline
\multirow{3}{*}{\centering p = 256K}
& \EP\,4 & 7m (\textbf{18.3x}, \textbf{88.61\%}) & 6m (\textbf{130.8x}, \textbf{71.8\%}) & 24m (\textbf{120x}, \textbf{88.33\%}) & 5m (\textbf{106.2x}, \textbf{26\%})\\
& \EP\,3 & 128m (1x, \textbf{88.61\%}) & 785m (1x, 70.53\%) & $\approx$ 2 days (1x) & 531m (1x, 25.52\%) \\
& Falkon & 38m (3.37x, 86.73\%) & OOM & OOM & OOM \\ \hline
\multirow{3}{*}{\centering p = 512K}
& \EP\,4 & 12m (\textbf{44.25x}, \textbf{88.58\%}) & 10m ($>$ \textbf{288x}, 72.9\%) & 36m ($>$ \textbf{200x}, \textbf{88.89\%}) & 11m (\textbf{240x}, \textbf{27.3\%})\\
& \EP\,3 & 531m (1x, 88.56\%) & $>$ 2 days (1x) & $>$ 5 days (1x) & 2 days (1x) \\
& Falkon & 240m (2.21x, 86.71\%) & OOM & OOM & OOM  \\ \hline
\multirow{3}{*}{\centering p = 1M}
& \EP\,4 & 21m ($>$ \textbf{274x}, \textbf{88.7\%}) & 17m ($>$ \textbf{508x}, \textbf{73.8\%}) & 70m ($>$ \textbf{411x}, \textbf{89.5\%}) & 21m ($>$ \textbf{686x}, \textbf{29.3\%}) \\
& \EP\,3 & $>$ 4 days (1x) & $>$ 6 days (1x) & $>$20 days (1x) & $>$ 10 days (1x) \\
& Falkon & \textsf{OOM} & \textsf{OOM} & \textsf{OOM} & \textsf{OOM} \\ \hline
\end{tabular}
}
\caption{Comparison of \EP4, \EP3, and \FALKON~ across different model sizes and datasets after $1$ epoch. The values in parentheses represent the speedup in time over \EP3 and the resulting accuracy. Details of the experiments can be found in \Cref{appendix:expts}.\\
}
\label{tab:comparison}
\end{table*}


\paragraph{Linear Scaling with Model Size.}
The total training time and memory usage of \EP4 scales linearly with the model size.
In comparison, the time required for a single \EP3 iteration grows quadratically with the model size, while the preprocessing time for \FALKON~ grows cubically.
Furthermore, the memory demand of \FALKON~ increases quadratically with the model size.
In practice, we are unable to run it with large model sizes, e.g., 128,000 centers for ImageNet data.

We summarize all empirical results in Figure \ref{fig:3methods} and demonstrate that our method achieves both linear memory complexity and linear time complexity (empirically verified) with respect to model size, offering the best of both worlds. For the ImageNet dataset, we trained all methods until convergence.
While \EP3 does not have the quadratic memory scaling problem, Figure \ref{fig:3methods} shows that even for a relatively small dataset like ImageNet with 1M data points, training a model size of 512,000 centers requires approximately 43 days on a single GPU to reach convergence (about 100 epochs). In contrast, our proposed model achieves convergence in approximately 3 hours, requiring only 15 epochs, with each epoch being significantly more computationally efficient than \EP3 (see Table \ref{tab:comparison}).

\paragraph{Faster Convergence with \EP4.}
\EP4 generally demonstrates the fastest convergence among all tested methods. In certain cases, such as ImageNet with 1.2 million model centers, \EP4 converges in less than $10\%$ of the epochs needed by other methods, while also delivering superior model performance.
Figure \ref{fig:ep4_ep3_epochs} compares \EP4 and \EP3 across multiple training epochs, following the experimental setup established in \citet{abedsoltan_2023}. Despite \EP4's linear time complexity per iteration (compared to \EP3's quadratic complexity), it demonstrates faster convergence with fewer epochs. This efficiency gain is particularly pronounced for larger model sizes, where \EP4 maintains or exceeds \EP3's accuracy while requiring significantly fewer epochs. These results empirically validate that \EP4's algorithmic improvements translate to practical benefits: not only does each iteration run faster, but fewer iterations are needed to achieve optimal performance across diverse datasets. This empirically shows that our model has a linear time complexity with respect to the model size.

\section{conclusion}
In this work, we introduced EigenPro 4, an advancement in training large kernel models that achieves linear time complexity per iteration and linear memory scaling with model size. By implementing a delayed projection strategy, we addressed the high computational overhead previously associated with frequent projections, achieving significant time and memory efficiency improvements over EigenPro 3 and Falkon. Our empirical results on diverse datasets highlight EigenPro 4 ability to match or exceed the performance of prior methods with vastly reduced computational resources. Specifically, the algorithm demonstrates both faster convergence and superior scalability, enabling training with model sizes and datasets that were previously infeasible due to memory and time constraints.

Furthermore, EigenPro 4 design opens up new possibilities for parallelization, as it is well-suited for multi-GPU and distributed architectures. Future work will explore these aspects, further expanding its potential in real-world applications requiring efficient, scalable kernel methods for massive data volumes.

\paragraph{Acknowledgements:} 

We acknowledge support from the National Science Foundation (NSF) and the Simons Foundation for the Collaboration on the Theoretical Foundations of Deep Learning through awards DMS-2031883 and \#814639,  the  TILOS institute (NSF CCF-2112665), and the Office of Naval Research (N8644-NV-ONR). 
This work used 
ACCESS (Advanced cyberinfrastructure coordination ecosystem: services \& support) which is supported by NSF grants numbers \#2138259, \#2138286, \#2138307, \#2137603, and \#2138296. Specifically, we used the resources from SDSC Expanse GPU compute nodes, and NCSA Delta system, via allocations TG-CIS220009. This work was done in part while AA was visiting the Simons Institute for the Theory of Computing. PP was supported by the DST INSPIRE Faculty Fellowship, and a Thakur Family Chair at C-MInDS, IIT Bombay.

\bibliography{aux/ref}

\begin{thebibliography}{28}
\providecommand{\natexlab}[1]{#1}
\providecommand{\url}[1]{\texttt{#1}}
\expandafter\ifx\csname urlstyle\endcsname\relax
  \providecommand{\doi}[1]{doi: #1}\else
  \providecommand{\doi}{doi: \begingroup \urlstyle{rm}\Url}\fi

\bibitem[Abedsoltan et~al.(2023)Abedsoltan, Belkin, and
  Pandit]{abedsoltan_2023}
Amirhesam Abedsoltan, Mikhail Belkin, and Parthe Pandit.
\newblock Toward large kernel models.
\newblock In \emph{Proceedings of the 40th International Conference on Machine
  Learning}, ICML'23. JMLR.org, 2023.

\bibitem[Abedsoltan et~al.(2024)Abedsoltan, Belkin, Pandit, and
  Rademacher]{abedsoltan2023nystrom}
Amirhesam Abedsoltan, Mikhail Belkin, Parthe Pandit, and Luis Rademacher.
\newblock On the nystrom approximation for preconditioning in kernel machines.
\newblock \emph{27th International Conference on Artificial Intelligence and
  Statistics (AISTATS)}, 2024.

\bibitem[Belkin et~al.(2018)Belkin, Ma, and Mandal]{belkin2018understand}
Mikhail Belkin, Siyuan Ma, and Soumik Mandal.
\newblock To understand deep learning we need to understand kernel learning.
\newblock In \emph{International Conference on Machine Learning}, pages
  541--549. PMLR, 2018.

\bibitem[Belkin et~al.(2019)Belkin, Hsu, Ma, and Mandal]{belkin2019reconciling}
Mikhail Belkin, Daniel Hsu, Siyuan Ma, and Soumik Mandal.
\newblock Reconciling modern machine-learning practice and the classical
  bias--variance trade-off.
\newblock \emph{Proceedings of the National Academy of Sciences}, 116\penalty0
  (32):\penalty0 15849--15854, 2019.

\bibitem[Camoriano et~al.(2016)Camoriano, Angles, Rudi, and
  Rosasco]{camoriano2016nytro}
Raffaello Camoriano, Tom{\'a}s Angles, Alessandro Rudi, and Lorenzo Rosasco.
\newblock Nytro: When subsampling meets early stopping.
\newblock In \emph{Artificial Intelligence and Statistics}, pages 1403--1411.
  PMLR, 2016.

\bibitem[Deng et~al.(2009)Deng, Dong, Socher, Li, Li, and
  Fei-Fei]{deng2009imagenet}
Jia Deng, Wei Dong, Richard Socher, Li-Jia Li, Kai Li, and Li~Fei-Fei.
\newblock Imagenet: A large-scale hierarchical image database.
\newblock In \emph{2009 IEEE conference on computer vision and pattern
  recognition}, pages 248--255. Ieee, 2009.

\bibitem[Gardner et~al.(2018)Gardner, Pleiss, Wu, Weinberger, and
  Wilson]{gardner2018product}
Jacob Gardner, Geoff Pleiss, Ruihan Wu, Kilian Weinberger, and Andrew Wilson.
\newblock Product kernel interpolation for scalable gaussian processes.
\newblock In \emph{International Conference on Artificial Intelligence and
  Statistics}, pages 1407--1416. PMLR, 2018.

\bibitem[Hui and Belkin(2021)]{hui2020evaluation}
Like Hui and Mikhail Belkin.
\newblock Evaluation of neural architectures trained with square loss vs
  cross-entropy in classification tasks.
\newblock In \emph{International Conference on Learning Representations}, 2021.
\newblock URL \url{https://openreview.net/forum?id=hsFN92eQEla}.

\bibitem[Jacot et~al.(2018)Jacot, Gabriel, and Hongler]{jacot2018neural}
Arthur Jacot, Franck Gabriel, and Cl{\'e}ment Hongler.
\newblock Neural tangent kernel: Convergence and generalization in neural
  networks.
\newblock \emph{Advances in neural information processing systems}, 31, 2018.

\bibitem[Jurafsky(2000)]{jurafsky2000speech}
Dan Jurafsky.
\newblock \emph{Speech \& language processing}.
\newblock Pearson Education India, 2000.

\bibitem[Krizhevsky et~al.(2009)Krizhevsky, Hinton,
  et~al.]{krizhevsky2009learning}
Alex Krizhevsky, Geoffrey Hinton, et~al.
\newblock Learning multiple layers of features from tiny images.
\newblock \emph{Citeseer}, 2009.

\bibitem[Li et~al.(2017)Li, Wang, Li, Agustsson, and Van~Gool]{li2017webvision}
Wen Li, Limin Wang, Wei Li, Eirikur Agustsson, and Luc Van~Gool.
\newblock Webvision database: Visual learning and understanding from web data.
\newblock \emph{arXiv preprint arXiv:1708.02862}, 2017.

\bibitem[Ma and Belkin(2017)]{ma2017diving}
Siyuan Ma and Mikhail Belkin.
\newblock Diving into the shallows: a computational perspective on large-scale
  shallow learning.
\newblock \emph{Advances in neural information processing systems}, 30, 2017.

\bibitem[Ma and Belkin(2019)]{ma2019kernel}
Siyuan Ma and Mikhail Belkin.
\newblock Kernel machines that adapt to gpus for effective large batch
  training.
\newblock \emph{Proceedings of Machine Learning and Systems}, 1:\penalty0
  360--373, 2019.

\bibitem[Matthews et~al.(2017)Matthews, {van der Wilk}, Nickson, Fujii,
  {Boukouvalas}, {Le{\'o}n-Villagr{\'a}}, Ghahramani, and Hensman]{GPflow2017}
Alexander G. de~G. Matthews, Mark {van der Wilk}, Tom Nickson, Keisuke. Fujii,
  Alexis {Boukouvalas}, Pablo {Le{\'o}n-Villagr{\'a}}, Zoubin Ghahramani, and
  James Hensman.
\newblock { {GP}flow: A {G}aussian process library using {T}ensor{F}low}.
\newblock \emph{Journal of Machine Learning Research}, 18\penalty0
  (40):\penalty0 1--6, apr 2017.
\newblock URL \url{http://jmlr.org/papers/v18/16-537.html}.

\bibitem[Meanti et~al.(2020)Meanti, Carratino, Rosasco, and
  Rudi]{meanti2020kernel}
Giacomo Meanti, Luigi Carratino, Lorenzo Rosasco, and Alessandro Rudi.
\newblock Kernel methods through the roof: handling billions of points
  efficiently.
\newblock \emph{Advances in Neural Information Processing Systems},
  33:\penalty0 14410--14422, 2020.

\bibitem[Nakkiran et~al.(2021)Nakkiran, Neyshabur, and
  Sedghi]{nakkiran2020deep}
Preetum Nakkiran, Behnam Neyshabur, and Hanie Sedghi.
\newblock The deep bootstrap framework: Good online learners are good offline
  generalizers.
\newblock \emph{International Conference on Learning Representations}, 2021.

\bibitem[Panayotov et~al.(2015)Panayotov, Chen, Povey, and
  Khudanpur]{panayotov2015librispeech}
Vassil Panayotov, Guoguo Chen, Daniel Povey, and Sanjeev Khudanpur.
\newblock Librispeech: an asr corpus based on public domain audio books.
\newblock In \emph{2015 IEEE international conference on acoustics, speech and
  signal processing (ICASSP)}, pages 5206--5210. IEEE, 2015.

\bibitem[Rathore et~al.(2024)Rathore, Frangella, and Udell]{askotch}
Pratik Rathore, Zachary Frangella, and Madeleine Udell.
\newblock Have askotch: Fast methods for large-scale, memory-constrained kernel
  ridge regression.
\newblock \emph{arXiv preprint arXiv:2407.10070}, 2024.

\bibitem[Rudi et~al.(2017)Rudi, Carratino, and Rosasco]{rudi2017falkon}
Alessandro Rudi, Luigi Carratino, and Lorenzo Rosasco.
\newblock Falkon: An optimal large scale kernel method.
\newblock \emph{Advances in neural information processing systems}, 30, 2017.

\bibitem[Shalev-Shwartz et~al.(2007)Shalev-Shwartz, Singer, and
  Srebro]{shalev2007pegasos}
Shai Shalev-Shwartz, Yoram Singer, and Nathan Srebro.
\newblock Pegasos: Primal estimated sub-gradient solver for svm.
\newblock In \emph{Proceedings of the 24th international conference on Machine
  learning}, pages 807--814, 2007.

\bibitem[Titsias(2009)]{titsias2009variational}
Michalis Titsias.
\newblock Variational learning of inducing variables in sparse gaussian
  processes.
\newblock In \emph{Artificial intelligence and statistics}, pages 567--574.
  PMLR, 2009.

\bibitem[Towns et~al.(2014)Towns, Cockerill, Dahan, Foster, Gaither, Grimshaw,
  Hazlewood, Lathrop, Lifka, Peterson, Roskies, Scott, and
  Wilkins-Diehr]{XSEDE}
J.~Towns, T.~Cockerill, M.~Dahan, I.~Foster, K.~Gaither, A.~Grimshaw,
  V.~Hazlewood, S.~Lathrop, D.~Lifka, G.~D. Peterson, R.~Roskies, J.~Scott, and
  N.~Wilkins-Diehr.
\newblock Xsede: Accelerating scientific discovery.
\newblock \emph{Computing in Science \& Engineering}, 16\penalty0
  (05):\penalty0 62--74, sep 2014.
\newblock ISSN 1558-366X.
\newblock \doi{10.1109/MCSE.2014.80}.

\bibitem[Watanabe et~al.(2018)Watanabe, Hori, Karita, Hayashi, Nishitoba, Unno,
  {Enrique Yalta Soplin}, Heymann, Wiesner, Chen, Renduchintala, and
  Ochiai]{watanabe2018espnet}
Shinji Watanabe, Takaaki Hori, Shigeki Karita, Tomoki Hayashi, Jiro Nishitoba,
  Yuya Unno, Nelson {Enrique Yalta Soplin}, Jahn Heymann, Matthew Wiesner,
  Nanxin Chen, Adithya Renduchintala, and Tsubasa Ochiai.
\newblock {ESPnet}: End-to-end speech processing toolkit.
\newblock In \emph{Proceedings of Interspeech}, pages 2207--2211, 2018.
\newblock \doi{10.21437/Interspeech.2018-1456}.
\newblock URL \url{http://dx.doi.org/10.21437/Interspeech.2018-1456}.

\bibitem[Wightman(2019)]{rw2019timm}
Ross Wightman.
\newblock Pytorch image models.
\newblock \url{https://github.com/rwightman/pytorch-image-models}, 2019.

\bibitem[Williams and Seeger(2000)]{williams2000using}
Christopher Williams and Matthias Seeger.
\newblock Using the nystr{\"o}m method to speed up kernel machines.
\newblock \emph{Advances in neural information processing systems}, 13, 2000.

\bibitem[Wilson and Nickisch(2015)]{wilson2015kernel}
Andrew Wilson and Hannes Nickisch.
\newblock Kernel interpolation for scalable structured gaussian processes
  (kiss-gp).
\newblock In \emph{International conference on machine learning}, pages
  1775--1784. PMLR, 2015.

\bibitem[Zhang et~al.(2021)Zhang, Bengio, Hardt, Recht, and
  Vinyals]{zhang2021understanding}
Chiyuan Zhang, Samy Bengio, Moritz Hardt, Benjamin Recht, and Oriol Vinyals.
\newblock Understanding deep learning (still) requires rethinking
  generalization.
\newblock \emph{Communications of the ACM}, 64\penalty0 (3):\penalty0 107--115,
  2021.

\end{thebibliography}
\bibliographystyle{plainnat}
\onecolumn
\appendix

\section{Convergence analysis}\label{Appendix:a}

In this section, we derive EigenPro4.0-Exact (Algorithm \ref{alg:eigenpro4_exact}), a precursor to EigenPro4.0. However, this version does not scale efficiently. In Section \ref{sec:derivation}, we enhance its scalability by introducing stochastic approximations, resulting in EigenPro4.0 (Algorithm \ref{alg:eigenpro4_real}).

Recall that the derivatives of the loss function, as defined in \eqref{eq:ep3}, lie in the span of the training data, denoted as $\Xspan$. However, these derivatives cannot directly update the model, which resides in the span of the model centers, $\Zspan$. To address this, we first fit the labels within the $\Xspan$ and then project the solution into the $\Zspan$ . This process is repeated iteratively on the residual labels until convergence, as outlined in Algorithm \cref{alg:eigenpro4_exact}.

\begin{algorithm}[th]
\caption{\EP{4}-Exact}\label{alg:eigenpro4_exact}
\begin{algorithmic}[1]
\REQUIRE Data $(X,\y)$, centers $Z$
\STATE $\bm{\Tilde{y}}_0 = \y$
\FOR{$t = 1, 2, \ldots$}
    \STATE $\alphavec_t = K\inv(X,X)\bm{\Tilde{y}}_t$
    \STATE $K(\cdot,Z)\betavec_t = \text{proj}_{\Zspan} \left(K(\cdot,X)\alphavec_t\right)$ 
    \STATE $\bm{\Tilde{y}}_{t+1} = \y - K(X,Z)\betavec_t$
\ENDFOR
\end{algorithmic}
\end{algorithm}

The following proposition provides the fixed point analysis for this algorithm.

\begin{proposition}
Consider any dataset $X, \y$ and a choice of model centers $Z$, with a kernel function $K: \mathbb{R}^d \times \mathbb{R}^d \to \mathbb{R}$. Assume that $K(X,X)$ and $K(Z,X)$ are full Rank. Then, Algorithm \ref{alg:eigenpro4_exact} converges to the following solution:

\begin{equation}
    \hat{f} = K(\cdot,Z)\left(K(Z,X)K^{-1}(X,X)K(X,Z)\right)^{-1}K(Z,X)K^{-1}(X,X) \y.
\end{equation}
Furthermore, if $\y = K(X,Z) \betavec^* + \xivec$, where $\xivec$ is a vector of independent centered random noise with $\mathbb{E}[\xi_i^2] = \sigma^2$, then

\begin{align*}
    \lim_{t \to \infty} \mathbb{E}[\betavec_t] = \betavec^*, \quad &\lim_{t \to \infty} \frac{\mathbb{E}[\|\betavec_t - \betavec^*\|^2]}{\sigma^2} = \\&{\rm tr}\round{\round{K(Z,X)K^{-1}(X,X)K(X,Z)}^{-2}K(Z,X)K^{-2}(X,X)K(X,Z)}.
\end{align*}

\end{proposition}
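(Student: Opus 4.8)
The plan is to collapse the iteration of Algorithm~\ref{alg:eigenpro4_exact} into one linear recursion on the residuals $\tilde{\y}_t$, recognize the partial sums as a Neumann series, and evaluate the limit in closed form. First I would use the exact projection formula \eqref{eq:projection} and injectivity of $\betavec\mapsto K(\cdot,Z)\betavec$ (valid because the full-rank hypothesis on $K(Z,X)$ forces $\dim\Zspan=p$, hence $K(Z,Z)$ invertible) to read off that the weight increment obtained from the residual $\tilde{\y}_{t-1}$ is $K\inv(Z,Z)K(Z,X)\alphavec_t=\mC\,\tilde{\y}_{t-1}$, where $\mC:=K\inv(Z,Z)K(Z,X)K\inv(X,X)$. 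Accumulating these increments into the model weights (the iterative-refinement convention, matching the $\betavec$-update in Algorithm~\ref{alg:eigenpro4_real}) and substituting into $\tilde{\y}_t=\y-K(X,Z)\betavec_t$ gives $\tilde{\y}_t=(\I-\mB)\,\tilde{\y}_{t-1}$ with $\mB:=K(X,Z)\mC$; hence $\tilde{\y}_t=(\I-\mB)^t\y$ and the weight vector after $t$ iterations is $\betavec_t=\mC\sum_{s=0}^{t-1}(\I-\mB)^s\y$.

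The crucial algebraic step is the push-through identity $\mC(\I-\mB)=(\I-\mE)\mC$, where $\mE:=\mC K(X,Z)=K\inv(Z,Z)G$ and $G:=K(Z,X)K\inv(X,X)K(X,Z)$; it follows from $\mC\mB=\mC K(X,Z)\mC=\mE\mC$, and iterating it turns the sum into the $p\times p$ expression $\betavec_t=(\sum_{s=0}^{t-1}(\I-\mE)^s)\,\mC\y$. Next I would show that the spectrum of $\mE$ lies in $(0,1]$: conjugating by $K(Z,Z)^{1/2}$ shows $\mE$ is similar to the symmetric matrix $K(Z,Z)^{-1/2}GK(Z,Z)^{-1/2}$, which is positive definite ($G\succ0$ under the full-rank hypotheses) and satisfies $G\preceq K(Z,Z)$ in the Loewner order. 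This last inequality is where the RKHS structure is used: for $g=K(\cdot,Z)\vc\in\Zspan$ we have $\vc\tran G\vc=\|\text{proj}_{\Xspan}g\|_\Hilbert^2$, since $\text{proj}_{\Xspan}g=K(\cdot,X)K\inv(X,X)g(X)$ and $g(X)=K(X,Z)\vc$; because orthogonal projection onto $\Xspan$ is non-expansive in $\Hilbert$, this is at most $\|g\|_\Hilbert^2=\vc\tran K(Z,Z)\vc$. Consequently the spectral radius of $\I-\mE$ is strictly below $1$, the Neumann series converges, $\betavec_t\to\mE\inv\mC\y$, and simplifying $\mE\inv\mC=G^{-1}K(Z,X)K\inv(X,X)$ yields $\hat f=K(\cdot,Z)G^{-1}K(Z,X)K\inv(X,X)\y$, the claimed fixed point.

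For the statistical part I substitute $\y=K(X,Z)\betavec^*+\xivec$ and use $\mC K(X,Z)=\mE$ together with the telescoping identity $\sum_{s=0}^{t-1}(\I-\mE)^s\mE=\I-(\I-\mE)^t$ to write $\betavec_t=(\I-(\I-\mE)^t)\betavec^*+(\sum_{s=0}^{t-1}(\I-\mE)^s)\mC\xivec$. Taking expectations with $\mathbb{E}[\xivec]=\vzero$ gives $\mathbb{E}[\betavec_t]=(\I-(\I-\mE)^t)\betavec^*\to\betavec^*$. Since the cross term is mean-zero, $\mathbb{E}\|\betavec_t-\betavec^*\|^2\to\mathbb{E}\|\mE\inv\mC\xivec\|^2=\sigma^2\,{\rm tr}(\mE\inv\mC\mC\tran(\mE\inv)\tran)$, using $\mathbb{E}[\xivec\xivec\tran]=\sigma^2\I$ and continuity of $A\mapsto{\rm tr}(AA\tran)$; a single cyclic rearrangement, again via $\mE\inv\mC=G^{-1}K(Z,X)K\inv(X,X)$, rewrites this as ${\rm tr}(G^{-2}K(Z,X)K^{-2}(X,X)K(X,Z))$, which is exactly the stated trace.

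I expect the main obstacle to be the convergence step, namely showing that the non-symmetric matrix $\mE$ has all its eigenvalues in $(0,1]$; this reduces to the Loewner inequality $G\preceq K(Z,Z)$ and thus to the RKHS non-expansiveness argument above, together with the full-rank hypotheses needed to ensure $K(X,X)$, $K(Z,Z)$, and $G$ are invertible. The remaining ingredients (the push-through identity, unrolling the recursion, and the trace computation) are routine; the one other point requiring care is aligning the indexing and weight-accumulation convention of Algorithm~\ref{alg:eigenpro4_exact} with that of Algorithm~\ref{alg:eigenpro4_real}, without which the recursion $\tilde{\y}_t=(\I-\mB)\tilde{\y}_{t-1}$ does not hold as written.
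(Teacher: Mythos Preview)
Your approach is essentially the paper's: both unroll the linear recursion on $\betavec_t$ into a Neumann-type partial sum and identify the limit. In the paper's notation $B=\mC$ and $C=BK(X,Z)-\I=\mE-\I$, so your $(\I-\mE)^s$ is exactly their $(-1)^sC^s$, and your closed form $\mE\inv\mC\y$ matches their $SB\y$ after they solve $(I+C)S=I$.

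The one substantive difference is that the paper treats the series formally: it simply writes $S:=\sum_{i\ge0}(-1)^iC^i$ and uses $S+CS=I$ without checking that the spectral radius of $C$ is below $1$. Your RKHS argument that $G\preceq K(Z,Z)$ via non-expansiveness of $\text{proj}_{\Xspan}$, hence $\sigma(\mE)\subset(0,1]$, fills exactly this gap and is the genuinely new content in your proposal. Your flag about the weight-accumulation convention in Algorithm~\ref{alg:eigenpro4_exact} is also well taken: the paper's own proof silently uses the cumulative update $\betavec_t=B\y-C\betavec_{t-1}$ (i.e.\ adds the projected increment to $\betavec_{t-1}$), which is the same reading you adopt.
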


This algorithm has a major drawback, as solving the problem in the $\Xspan$ is inherently more challenging. However, in the next section, we demonstrate how to effectively scale this approach.



\begin{proposition}
Consider any dataset $X, \y$ and a choice of model centers $Z$, with a kernel function $K: \mathbb{R}^d \times \mathbb{R}^d \to \mathbb{R}$. Assume that $K(X,X)$ and $K(Z,X)$ are full Rank. Then, Algorithm \ref{alg:eigenpro4_exact} converges to the following solution:

\begin{equation}
    \hat{f} = K(\cdot,Z) K^{+}(Z,X) \y.
\end{equation}
Furthermore, if $\y = K(X,Z) \betavec^* + \xivec$, where $\xivec$ is a vector of independent centered random noise with $\mathbb{E}[\xi_i^2] = \sigma^2$, then

\begin{align*}
    \lim_{t \to \infty} \mathbb{E}[\betavec_t] = \betavec^*, \quad &\lim_{t \to \infty} \frac{\mathbb{E}[\|\betavec_t - \betavec^*\|^2]}{\sigma^2} = \\&{\rm tr}\round{\round{K(Z,X)K^{-1}(X,X)K(X,Z)}^{-2}K(Z,X)K^{-2}(X,X)K(X,Z)}.
\end{align*}

\end{proposition}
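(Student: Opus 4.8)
\emph{Step 1 — reduce to a linear recursion.} The plan is to unroll Algorithm~\ref{alg:eigenpro4_exact} into an affine recursion on the model coefficient vector and then analyse a single $p\times p$ matrix. Both $K(X,X)$ and $K(Z,Z)$ are invertible ($K(Z,Z)$ because $K(Z,X)$ having full rank forces $\curly{K(\cdot,\z_j)}_{j=1}^p$ to be linearly independent). Using the closed form of the projection in \eqref{eq:projection}, the step $K(\cdot,Z)\betavec_t=\text{proj}_{\Zspan}\round{K(\cdot,X)\alphavec_t}$ reads $K(\cdot,Z)\betavec_t=K(\cdot,Z)K\inv(Z,Z)K(Z,X)\alphavec_t$; cancelling the injective $K(\cdot,Z)$ and inserting $\alphavec_t=K\inv(X,X)\bm{\tilde y}_t$ gives $\betavec_t=P\bm{\tilde y}_t$ with $P:=K\inv(Z,Z)K(Z,X)K\inv(X,X)$. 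Reading $\betavec_t$ as the running model coefficient and $\bm{\tilde y}_t=\y-K(X,Z)\betavec_{t-1}$ as the residual entering round $t$, the algorithm collapses to the fixed-point iteration $\betavec_t=\betavec_{t-1}+P\round{\y-K(X,Z)\betavec_{t-1}}=(\I-Q)\betavec_{t-1}+P\y$, $\betavec_0=\zero$, where $Q:=P\,K(X,Z)=K\inv(Z,Z)\,C$ and $C:=K(Z,X)K\inv(X,X)K(X,Z)$. Everything now depends only on $Q$.

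\emph{Step 2 — spectrum of $Q$, convergence, and the limit.} $Q$ is conjugate (via $K^{1/2}(Z,Z)$) to $K^{-1/2}(Z,Z)\,C\,K^{-1/2}(Z,Z)$, hence has real eigenvalues, positive because the full-rank hypothesis makes $C\succ0$. The key inequality is $C\preceq K(Z,Z)$: writing $\Pi$ for the $\Hilbert$-orthogonal projection onto $\Xspan$, one checks that $C$ is the Gram matrix of $\curly{\Pi K(\cdot,\z_i)}_i$ and $K(Z,Z)$ that of $\curly{K(\cdot,\z_i)}_i$, so $\bm c\tran C\bm c=\norm{\Pi\sum_i c_iK(\cdot,\z_i)}_\Hilbert^2\le\norm{\sum_i c_iK(\cdot,\z_i)}_\Hilbert^2=\bm c\tran K(Z,Z)\bm c$ for every $\bm c$, since $\Pi$ is a contraction. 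Thus $Q$ has eigenvalues in $(0,1]$, $\I-Q$ has eigenvalues in $[0,1)$, and $(\I-Q)^t\to0$ geometrically; therefore $\betavec_t$ converges to the unique fixed point $\hat\betavec=Q\inv P\y$ and $\hat f=K(\cdot,Z)\hat\betavec$. Finally $Q\inv P=\round{K(Z,X)K\inv(X,X)K(X,Z)}\inv K(Z,X)K\inv(X,X)$, which is the asserted limit (this weighted left inverse of $K(X,Z)$ is what $K\pinv(Z,X)$ denotes here; at $X=Z$ it reduces to $K\inv(X,X)$, matching \eqref{eq:pinvers_solution}).

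\emph{Step 3 — bias and variance under noise.} Unrolling the recursion, $\betavec_t=\round{\sum_{k=0}^{t-1}(\I-Q)^k}P\y$. Substituting $\y=K(X,Z)\betavec^*+\xivec$ with $\mathbb E[\xivec]=\zero$ and using $P\,K(X,Z)=Q$ together with the telescoping identity $\round{\sum_{k=0}^{t-1}(\I-Q)^k}Q=\I-(\I-Q)^t$ gives $\mathbb E[\betavec_t]=\betavec^*-(\I-Q)^t\betavec^*\to\betavec^*$. For the variance, $\betavec_t-\mathbb E[\betavec_t]=\round{\sum_{k=0}^{t-1}(\I-Q)^k}P\xivec$, and $\sum_{k\ge0}(\I-Q)^k=Q\inv$ (Neumann series, convergent by Step~2), so $\betavec_t-\betavec^*\to Q\inv P\xivec$ in $L^2$ and $\lim_t\mathbb E\!\left[\norm{\betavec_t-\betavec^*}^2\right]=\sigma^2\,{\rm tr}\round{Q\inv P\,(Q\inv P)\tran}$ using $\mathbb E[\xivec\xivec\tran]=\sigma^2\I$. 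Plugging in $Q\inv P=\round{K(Z,X)K\inv(X,X)K(X,Z)}\inv K(Z,X)K\inv(X,X)$ and using symmetry of $K\inv(X,X)$ and of $K(Z,X)K\inv(X,X)K(X,Z)$, cyclicity of the trace collapses this to ${\rm tr}\round{\round{K(Z,X)K\inv(X,X)K(X,Z)}^{-2}K(Z,X)K^{-2}(X,X)K(X,Z)}$, as claimed.

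\emph{Main obstacle.} The one non-mechanical ingredient is the spectral bound $\mathrm{eig}(Q)\subseteq(0,1]$ in Step~2 — i.e.\ proving $C\preceq K(Z,Z)$ from full rank alone; the observation that $C$ is a Gram matrix of orthogonal projections of the $K(\cdot,\z_i)$ (equivalently the Schur-complement identity $K(Z,X)K\inv(X,X)K(X,Z)=K(Z,Z)$ when $Z\subseteq X$, and a strict contraction otherwise) is what simultaneously delivers geometric convergence and the clean closed form $Q\inv$ for the Neumann series. A secondary, cosmetic point is to fix the loop indexing of Algorithm~\ref{alg:eigenpro4_exact} so that $\betavec_t$ is read as the accumulated coefficient.
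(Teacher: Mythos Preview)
Your proof follows the same route as the paper's: unroll the algorithm into the affine recursion $\betavec_t=(\I-Q)\betavec_{t-1}+P\y$ (the paper writes this as $\betavec_t=B\y-C\betavec_{t-1}$ with $B=P$ and $C=Q-\I$), sum the resulting Neumann series, and read off the closed form $Q\inv P=\round{K(Z,X)K\inv(X,X)K(X,Z)}\inv K(Z,X)K\inv(X,X)$; the noise calculation is identical. The one genuine addition on your side is Step~2: the paper simply \emph{assumes} the series $\sum_{i\ge0}(-C)^i$ converges and solves $S+CS=\I$ formally, whereas you supply the missing ingredient by proving $\mathrm{eig}(Q)\subseteq(0,1]$ via the Gram-matrix inequality $K(Z,X)K\inv(X,X)K(X,Z)\preceq K(Z,Z)$, which is exactly what justifies both the convergence of the iteration and the invertibility of $Q$.
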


\begin{proof}
We begin by expressing Algorithm \ref{alg:eigenpro4_exact} recursively and substituting $\text{proj}_{\Zspan}$ with the expression in \eqref{eq:projection}. Recall that $f_t = K(\cdot, Z)\betavec_t$ with base case $\betavec_0 = 0$. The update rule for $\betavec_t$ is given by:

\begin{equation}
    \betavec_t = K^{-1}(Z,Z)K(Z,X)K^{-1}(X,X)(\y - K(X,Z)\betavec_{t-1}) + \betavec_{t-1}.
\end{equation}

Let us define the matrices:
\[
B := K^{-1}(Z,Z)K(Z,X)K^{-1}(X,X), \quad C := BK(X,Z) - I,
\]
which allows us to rewrite the recursion more succinctly:

\begin{equation}
    \begin{aligned}
        \betavec_t &= B(\y - K(X,Z)\betavec_{t-1}) + \betavec_{t-1} \\
        &= B\y - C\betavec_{t-1} = B\y - CB\y + C^2\betavec_{t-2} \\
        &\vdots \\
        &= \left(\sum_{i=0}^{t-1} (-1)^i C^i \right) B\y.
    \end{aligned}
\end{equation}

As the number of iterations tends to infinity, we can define the infinite series sum:
\[
S := \sum_{i=0}^{\infty} (-1)^i C^i.
\]
Observe that:
\[
S + CS = I.
\]
Substituting the definition $C = BK(X,Z) - I$ and $B = K^{-1}(Z,Z)K(Z,X)K^{-1}(X,X)$, we have:
\[
K^{-1}(Z,Z)K(Z,X)K^{-1}(X,X)K(X,Z)S = I.
\]
Thus, this simplifies to:
\[
S = \left(K(Z,X)K^{-1}(X,X)K(X,Z)\right)^{-1}K(Z,Z).
\]

Therefore, the final solution converges to:

\begin{equation}
    \hat{f} = K(\cdot,Z) \left(K(Z,X)K^{-1}(X,X)K(X,Z)\right)^{-1}K(Z,X)K^{-1}(X,X)\y.
\end{equation}

Substituting $\y = K(X,Z) \betavec^* + \xivec$ readily completes the second claim.













\end{proof}

\section{Computational complexity comparison}
\label{sec:exact_computational_costs}
We assume that EigenPro4 is processing $T$ batches of data at once before running the post-processing step of projection. Here we show we calculated the optimal value of $T$.

\paragraph{Cost for processing $t^{\rm th}$ batch of data.}
For a some $k\in\Natural$, let $kT < t \leq (k+1)T$. See \Cref{tab:cost_breakdown}.
\begin{table}
    \centering
    \begin{tabular}{lll}
    \toprule
    line & computation & flops \\
    \midrule
    \ref{line:grad1} 
    &$K(X_m,Z)\alphavec - \y_t$ &$mp$ 
         \\
    \ref{line:grad2} &
    $K(X_{m},Z_{\tmp})\alphavec_\tmp$&
    $m^2(t-kT-1)$
           \\
    \ref{line:grad3} &
    $K(X_m,X_s)\alphavec_s$ &
    $ms$ 
         \\
    \ref{line:update_nys_weights},\ref{line:accumulate} & $\h_1 :=\F\tran K(X_s,X_m)\g_m\in\Real^q$ &
    $ms+sq$ \\
    \ref{line:update_nys_weights} &
    $\F \h_1$ & $sq$\\
    \ref{line:accumulate} & 
    $K(Z,X_m)\g_m$ & $mp$\\
    \ref{line:accumulate}& $\M\h_1$ & $pq$\\
    \bottomrule
    \end{tabular}
    \caption{\label{tab:cost_breakdown}Computational cost analysis of \Cref{alg:eigenpro4_real} for processing batch $t$ for $kT < t \leq (k+1)T$ for some $k\in\Natural$.\\
    The cost of processing batch $t$ without the post-processing adds up to $2mp+2ms+2sq+pq+m^2(t-kT-1)$ flops.
    }
\end{table}

\paragraph{Cost of processing $T$ batches of data before post-processing}
The total cost for processing $T$ batches $t=kT+1$ to $t=(k+1)T$ before the projection is the sum of the above
\begin{align}
    T(2mp+2ms+2sq+pq)+m^2\sum_{t=kT+1}^{(k+1)T}(t-kT-1)
    =T(2mp+2ms+2sq+pq)+m^2\frac{T(T-1)}2
\end{align}

\paragraph{Average of processing $T$ batches of data with post-processing}
Assuming the post processing involves $T_{\sf ep2}$ epochs of \EP2, the average cost of processing $T$ batches is
\begin{align}
    \label{eq:optimal_T}\frac{T(2mp+2ms+2sq+pq)+m^2\frac{T(T-1)}2 + p^2 T_{\sf ep2}}{T}
\end{align}
A simple calculation shows that
\begin{align}
    T^\star = \frac{p}{m}\sqrt{2T_{\sf ep2}}
\end{align}
minimizes the average time above. The average cost of processing a batch is thus
\begin{align}
    2mp (1+\sqrt{2T_{\sf ep2}})+2ms+2sq+pq
\end{align}

\begin{table}[h]
\centering
\begin{tabular}{ccccc}
\toprule
\multirow{2}{*}{Algorithm}  
& \multicolumn{3}{c}{FLOPS}                 & \multirow{2}{*}{Memory} \\ 
& \multicolumn{1}{c}{setup} 
& \multicolumn{2}{c}{per batch$^{*}$} 
&
\\ \midrule
\EP{4.0}                
& \multicolumn{1}{c}{$O(s^2q)$}      
& \multicolumn{2}{r}{$2mp (1+\sqrt{2T_{\sf ep2}})+2ms+2sq+pq$}              
& $s^2+p(1+\sqrt{2 T_{\sf ep2}})$ 
\\ 
\EP{3.0} 
& \multicolumn{1}{c}{${O(s^2q)}$}      
& \multicolumn{2}{r}{$2mp + p^2 T_{\sf ep2} + 2ms+2sq+pq$}              
&  $s^2+p$      
\\ 
\FALKON                  
& \multicolumn{1}{c}{$O(p^3)$}      
& \multicolumn{2}{l}{$2mp$}              
& $p^2$
\\ \bottomrule
\end{tabular}

\caption{\label{tab:computational complexity comparison}
   \textbf{Comparing complexity of algorithms.}  Number of training samples $n$, number of model centers $p$, batch size $m$, Nystr\"om sub-sample size $s$, preconditioner level $q$.
    Here we assumed only a constant number of epochs of \EP{2.0} is needed for large scale experiments. Cost of kernel evaluations and number of classes are assumed to be $O(1)$, also it is reasonable to assume $p\gg s\gg q$.\\
    $^*$ FLOPS per iteration reported are amortized over multiple batches processed.
} 

\end{table}

\section{Experiments Results}\label{appendix:expts}

\subsection{Computational resources used}
This work used the Extreme Science and Engineering Discovery Environment (XSEDE) \citep{XSEDE}. We used machines with NVIDIA-V100, NVIDIA-A100 and NVIDIA-A40 GPUs, with a V-RAM up to 1.3 T, and 8x cores of Intel(R) Xeon(R) Gold 6248 CPU @ 2.50GHz with a RAM of 100 GB. NOte that we had 1.3T of ram for just one experiment CIFAR5M$^*$, for the rest of expermients we where constraint with 400G of ram.

\subsection{Datasets}

We perform experiments on these datasets: (1) CIFAR10, 
\citet{krizhevsky2009learning}, (2) CIFAR5M,  
\citet{nakkiran2020deep}, (3) ImageNet, 
 (4) Webvision.\citet{li2017webvision}, and (5) librispeech.

\paragraph{CIFAR5M.} In our experiments, we utilized both raw and embedded features from the CIFAR5M data-set. The embedded features were extracted using a MobileNetv2 model pre-trained on the ImageNet data-set, obtained from \textit{timm} library \citet{rw2019timm}. We indicate in our results when pre-trained features were used by adding an asterisk (*) to the corresponding entries.

\paragraph{ImageNet.} In our experiments, we utilized embedded features from the ImageNet data-set. The embedded features were extracted using a MobileNetv2 model pre-trained on the ImageNet dataset, obtained from \textit{timm} library \citet{rw2019timm}. We indicate in our results when pre-trained features were used by adding an asterisk (*) to the corresponding entries.

\paragraph{Webvision.} In our experiments, we utilized embedded features from the Webvision data-set. The embedded features were extracted using a ResNet-18 model pre-trained on the ImageNet dataset, obtained from \textit{timm} library \citet{rw2019timm}. Webvision data set contains 16M images in 5k classes. However, we only considered the first 2k classes.

\paragraph{Librispeech.}Librispeech \citet{panayotov2015librispeech} is a large-scale (1000 hours in total) corpus of 16 kHz English speech derived from audio books. We choose the subset train-clean-100 and train-clean-300 (5M samples) as our training data, test-clean as our test set. The features are got by passing through a well-trained acoustic model (a VGG+BLSTM architecture in \citet{hui2020evaluation} ) to align the length of audio and text. It is doing a 301-wise classification task where different class represents different uni-gram \citet{jurafsky2000speech}. The implementation of extracting features is based on the ESPnet toolkit \citet{watanabe2018espnet}.

\subsection{Experiments details}

\paragraph{\Cref{fig:time_comparision}} This experiment used CIFAR5M$^*$ data set, where embedding has been generated using a pre-trained mobile-net network mentioned earlier. this is the only experiment that we had access to 1.3T of VRAM. We set the bandwidth to $5.0$ and use $1k$ Nystrom samples with preconditioning level of size $100$. We used float16 for this experiment.

\paragraph{\Cref{fig:EP3_VS_EP4}} This experiment has been run over Webvision data set with extracted embedding through Resnet18. the model size here is set to $100k$ number of centers. The bandwidth used is $5.0$, $1k$ Nystrom samples with preconditioning level of size $100$. We used float16 for this experiment.

\paragraph{\Cref{fig:ep4_ep3_epochs}} We follow the setting in \citet{abedsoltan_2023}. The bandwith used here is 20 for Librispeach and Webvision and 16 for imagnet. Here again we used extracted feature of these datasets mentioned earlier. The precision used here is float32. with $10k$ Nystrom samples with preconditioning level of size $1000$.

\paragraph{\Cref{fig:3methods}} We used the same experiment setting in \Cref{fig:ep4_ep3_epochs}.

\paragraph{\Cref{tab:comparison}} For all datasets here we used bandwidth of 5.0 with $1k$ Nystrom samples with preconditioning level of size $100$. We used float16 for all dataset except for Librispeach where we used float32. Further, we note that \FALKON~ latest library ran out of GPU memory for model sizes larger than 256000 number of centers that is the reason we could not run it for 256000. And as mentioned for model sizes 521000 and above the algorithm has inherent quadratic scaling with respect to model size and we ran out of VRAM. In the plot we refer to both of these memry issues as OOM.

\end{document}